\documentclass[letterpaper, 10 pt, conference]{ieeeconf}  

\IEEEoverridecommandlockouts                              

\usepackage{amsmath,amsfonts,amsthm}
\usepackage[noend]{algorithmic}
\usepackage{algorithm}
\usepackage{array}
\usepackage{textcomp}
\usepackage{stfloats}
\usepackage{url}
\usepackage{verbatim}
\usepackage{graphicx}
\usepackage{cite}
\usepackage{bbm}
\usepackage{siunitx}
\usepackage{booktabs}
\newtheorem{definition}{Definition}

\newtheorem{theorem}{Theorem}
\newtheorem{assumption}{Assumption}
\newtheorem{remark}{Remark}
\newtheorem{proposition}{Proposition}

\newboolean{redactAuthors}
\newcommand{\redactReplace}[2]{\ifthenelse{\boolean{redactAuthors}}{\textcolor{red}{#1}}{#2}}
\newcommand{\redactNamed}[2]{\ifthenelse{\boolean{redactAuthors}}{\textcolor{red}{#1 removed for double-anonymous review process}}{#2}}
\newcommand{\redactInline}[1]{\ifthenelse{\boolean{redactAuthors}}{\textcolor{red}{(removed for double-anonymous review process)}}{#1}}
\newcommand{\redactRemove}[1]{\ifthenelse{\boolean{redactAuthors}}{}{#1}}

\setboolean{redactAuthors}{false}

\title{\LARGE \bf
Distribution-Free Risk-Aware Planning and Control Under Uncertainty Using Conformal Spectral Risk Control
}

\author{
\thanks{\textcolor{red}{This work has been submitted to the IEEE for possible publication. Copyright may be transferred without notice, after which this version may no longer be accessible.}}
\redactNamed{Authorship}{Junsik Eom, Tulga Ersal$^*$}%
\redactReplace{\thanks{\textcolor{red}{Affiliation notes removed for double-anonymous review process.}}}{\thanks{J. Eom and T. Ersal are with the Department of Mechanical Engineering, University of Michigan, Ann Arbor, MI 48109. (email: \{jeom, tersal\}@umich.edu)}}%
\thanks{* Corresponding author}
}

\begin{document}

\maketitle
\thispagestyle{empty}
\pagestyle{empty}

\begin{abstract}
Safe navigation in dynamic and uncertain environments often relies on accurate estimation of, or assumptions about, the true underlying uncertainty. However, accurately characterizing the true uncertainty distribution is often difficult due to limited data or imperfect information. An incorrect understanding of the uncertainty and its associated risk may lead to dangerous decisions even under high levels of risk aversion. To address this issue, we propose a risk-aware model predictive control (RA-MPC) framework that incorporates prediction sets to guarantee risk control below a user-specified threshold without requiring assumptions about the underlying uncertainty distribution. To generate the prediction sets, we develop a distribution-free risk quantification framework that extends conformal risk control (CRC) to general spectral risk measures. We then show that incorporating the prediction sets into the MPC framework provides statistical safety guarantees in terms of spectral risk constraint satisfaction even under uncertainty misspecification. We validate the proposed framework in simulated vehicle obstacle avoidance scenarios, demonstrating improved safety and reduced solve time compared to a baseline RA-MPC framework.
\end{abstract}

\section{Introduction}

Safe planning and control for autonomous systems is gaining increasing attention as such systems are deployed in complex real-world environments, including dense urban settings \cite{kummerle2015autonomous} and off-road terrain \cite{yin2023reliable}. 
A major challenge in these environments is the presence of uncertainties arising from imperfect information about the surrounding environment. 
To ensure safety in uncertain environments, autonomous systems must be able to systematically evaluate the risks associated with uncertainty \cite{akella2025risk}.

Among many methods for safe planning and control, model predictive control (MPC) has been widely used, as it can compute optimal trajectories and control inputs while explicitly enforcing safety constraints. 
Classical approaches to handling uncertainty in MPC include Robust MPC (RMPC) \cite{bemporad2007robust} and Stochastic MPC (SMPC) \cite{mesbah2016stochastic}. 
RMPC typically assumes a bounded uncertainty set and enforces safety constraints for the worst-case uncertainty realization, which can provide strong safety guarantees, but often leads to conservative decisions. SMPC reduces conservatism compared to RMPC by enforcing probabilistic safety constraints. However, probabilistic constraints cannot account for rare, safety-critical (``tail") events beyond the chosen probability threshold, and the safety depends on the accuracy of the assumed uncertainty distribution.

To address some of the limitations of RMPC and SMPC, risk-aware MPC (RA-MPC) \cite{singh2018framework} has recently gained attention, utilizing risk measures such as conditional value-at-risk (CVaR)\cite{rockafellar2000optimization} to enforce safety constraints that can account for tail events while reducing conservatism compared to the worst-case approach used in RMPC. 
However, existing RA-MPC methods still rely on accurate characterization of the underlying uncertainty to evaluate the risk and enforce safety constraints. 
In practice, the true uncertainty distribution is often unknown. 
Therefore, existing methods commonly model the uncertainty through empirical estimation or distributional assumptions, neither of which is guaranteed to accurately reflect the true uncertainty encountered during operation. 
As a result, these methods do not guarantee risk constraint satisfaction with respect to the true uncertainty distribution.

To address this challenge, we propose a novel distribution-free RA-MPC framework with statistical guarantees on risk constraint satisfaction. 
Specifically, we develop a distribution-free risk quantification framework by extending conformal risk control (CRC) \cite{angelopoulos2022conformal} to general spectral risk measures \cite{dowd2008spectral}. 
The proposed method generates prediction sets that directly control spectral risk below a user-specified threshold without requiring distributional assumptions.
We then incorporate the prediction sets into an RA-MPC framework to enforce spectral risk constraints with respect to the true uncertainty distribution. 

Our main contributions in this work are:
\begin{enumerate}
    \item We develop a distribution-free risk quantification method that generates prediction sets that control spectral risks below a user-specified threshold without requiring assumptions on the underlying uncertainty.
    \item We present an RA-MPC framework that incorporates these prediction sets to provide statistical safety guarantees in terms of spectral risk constraint satisfaction with respect to the true uncertainty.
    \item We validate the proposed MPC framework through simulation studies of stationary and dynamic obstacle avoidance, demonstrating improved safety and reduced solve time under uncertainty misspecification compared to a baseline RA-MPC framework.
\end{enumerate}

\section{Related Work}
RA-MPC has recently received significant attention in the autonomous planning and control domain. 
Existing works have explored leveraging a variety of risk measures and assumptions on the uncertainty distribution. 
For example, CVaR has been utilized to quantify collision risk with randomly moving obstacles in \cite{hakobyan2019risk}, where the uncertainty in obstacle location and orientation is estimated using sample average approximation (SAA). 
This approach has been extended to a distributionally robust formulation in \cite{hakobyan2021wasserstein}, improving robustness to estimation errors in the underlying uncertainty distribution. 
Furthermore, researchers have investigated using other risk measures beyond CVaR in the RA-MPC framework, such as entropic value-at-risk (EVaR) \cite{dixit2021risk} and more general $g$-entropic risk measures \cite{dixit2023risk}.

Existing RA-MPC methods generally assume that the uncertainty distribution is known, or can be accurately estimated from data. 
In practice, however, the true uncertainty distribution is often unknown and difficult to estimate. 
Therefore, existing methods cannot provide safety guarantees and may lead to unsafe decisions if the estimated or assumed uncertainty distribution is inaccurate. 

Conformal Prediction (CP) \cite{angelopoulos2023conformal,vovk2005algorithmic} has recently gained attention as a powerful distribution-free uncertainty quantification framework. 
CP generates prediction sets that provide coverage probability guarantees, without requiring assumptions on the underlying data distribution. 
CRC \cite{angelopoulos2022conformal} generalizes CP by controlling the expected value of bounded monotone loss functions below a user-specified threshold. 
Several extensions of CRC have been developed, including methods for handling covariate shift \cite{zecchin2025generalization,angelopoulos2022conformal} and optimized certainty equivalent (OCE) risk measures \cite{yeh2025conformal}. 
Closely related to our work, researchers extend CRC to distortion risk measures and provide high-probability risk bounds in \cite{chen2025conformal}. 
In contrast, our work provides direct control of spectral risk measures, and the underlying theory and application context are different.

Because of its distribution-free nature and robust statistical guarantees, CP and CRC have been widely applied in planning and control under uncertainty \cite{lindemann2024formal}. 
In the context of MPC, CP was used to quantify the uncertainty in obstacle trajectory predictions in \cite{lindemann2023safe}, where the prediction sets are incorporated into the MPC constraints to guarantee collision avoidance with user-specified probability. In \cite{dixit2023adaptive}, the prediction sets were adapted online to maintain collision avoidance guarantees under distribution shifts. 
In \cite{chee2023uncertainty,chee2024uncertainty}, CP was utilized to quantify uncertainty in learned dynamics models for MPC.
More closely related to our work, researchers in \cite{zecchin2024forking} utilize ideas from CRC to present an MPC framework with statistical guarantees for expectation constraints.
Additionally, researchers in \cite{zhang2025conformal,zhou2024safety,hsu2025statistical,yang2023safe} design control barrier functions (CBF) that incorporate uncertainty quantification from CP for safety-critical control. 
However, existing conformal methods in planning and control are limited to probabilistic or expectation-based constraints rather than the more general risk constraints considered in RA-MPC.

Based on the literature review, we identify the following research gap.
Existing RA-MPC methods can account for tail events, but they typically depend on estimations of, or simplifying assumptions about, the underlying uncertainty, which may be inaccurate. 
Conversely, existing conformal methods for uncertainty and risk quantification do not require such assumptions, but they do not extend to direct control of spectral risk measures. 
Additionally, existing RA-MPC methods are often limited to a small number of risk measures. 
This motivates the development of a distribution-free RA-MPC framework that can enforce general spectral risk constraints while remaining robust to uncertainty misspecification.
\label{sec:related_work}

\section{Preliminaries}
This section introduces the fundamental concepts and notation used for CRC and spectral risk measures.
\subsection{Conformal Risk Control}
CRC \cite{angelopoulos2022conformal} extends CP to generate prediction sets that provide guarantees to control the expected value of bounded, monotone loss functions below a user-specified threshold. 
Formally, given a loss function $L$ and a model that outputs a prediction $\hat{Y}$ of $Y$, CRC finds the tightest prediction set $\mathcal{C_{\lambda}}(\hat{Y})$ parameterized by $\lambda\in\Lambda$ that guarantees
\begin{equation}
    \mathbb{E}[L(\mathcal{C_{\lambda}}(\hat{Y}),Y)]\leq\theta
    \label{eq:crc}
\end{equation}
where $\theta$ is the user-specified risk threshold. 
The guarantees provided by CRC hold for bounded loss functions that are monotonically nonincreasing with $\lambda$. 
To find the value of $\lambda$ corresponding to the tightest $C_{\lambda}$ that satisfies \eqref{eq:crc}, CRC assumes access to a calibration dataset of exchangeable data points $D_{cal}:=\{Y_i\}_{i=1}^n$ and the corresponding predictions $\hat{Y}_i$. 
Then, $\hat{\lambda}$, the optimized prediction set parameter, can be found by solving the following optimization problem:
\begin{equation}
    \hat\lambda = \inf_{\lambda\in\Lambda}\biggl\{\lambda:\frac{1}{n+1}\sum^{n}_{i=1}L_{i}(\lambda)+\frac{B}{n+1}\leq\theta\biggl\}
    \label{eq:crc_opt}
\end{equation}
where $L_i(\lambda):=L(\mathcal{C_{\lambda}}(\hat{Y}_i),Y_i)$, and $B$ is the upper bound of $L_i(\lambda)$. See \cite{angelopoulos2022conformal} for the formal proof.

\subsection{Spectral Risk Measures}
Consider a probability space $(\Omega,\mathcal{F},P)$, where $\Omega,\mathcal{F},$ and $P$ are the sample space, $\sigma$-algebra, and the probability measure over $\mathcal{F}$, respectively. A random variable $Z:\Omega\rightarrow \mathbb{R}$ represents the cost of the samples in $\Omega$. 
Let $\mathcal{Z}$ be the set of random variables.
A \textit{risk measure} $\rho:\mathcal{Z}\rightarrow\mathbb{R}$ is a function that maps a random variable to a real number that captures how risky the possible outcomes of a random variable are.

Coherent risk measures \cite{artzner1999coherent} were first proposed in the financial literature to characterize risk measures that have desirable properties to accurately assess the risk associated with financial assets. 
Spectral risk measures are a subclass of coherent risk measures that also satisfy law invariance and comonotone additivity. Researchers have advocated for the use of spectral risk measures to assess risk in robotics in \cite{majumdar2019should}, providing intuitive explanations on how the properties of spectral risk measures lead to rational risk assessment.

\begin{definition}[Spectral risk measure]
    Consider two random variables $Z,Z' \in \mathcal{Z}$. A spectral risk measure is defined as a risk measure $\rho:\mathcal{Z} \to \mathbb{R}$ that satisfies the following properties:
    \begin{enumerate}
        \item Monotonicity: If $Z\leq Z'$, then $\rho(Z) \leq \rho(Z')$
        \item Subadditivity: $\rho(Z+Z')\leq \rho(Z)+\rho(Z')$
        \item Translation Invariance: $\rho(Z+c)=\rho(Z)+c$ $\forall c\in \mathbb{R}$
        \item Positive Homogeneity: $\rho(c Z)=c\rho(Z)$ $\forall c\geq 0$
        \item Law Invariance: If $Z$ and $Z'$ have the cumulative distribution functions (CDF) $F_Z$ and $F_{Z'}$, and if $F_Z=F_{Z'}$, then $\rho(Z)=\rho(Z')$
        \item Comonotone Additivity: If $Z$ and $Z'$ are comonotone, then $\rho(Z+Z')=\rho(Z)+\rho(Z')$
    \end{enumerate}
    \label{def:spectral_risk}
\end{definition}

Additionally, the spectral risk measure of a random variable $Z$ can be represented in the form
\begin{equation}
    \rho(Z)=\int_{0}^{1}\phi(v)F_Z^{-1}(v)dv
    \label{eq:spec_int}
\end{equation}
where $F_Z^{-1}(v):=\inf\{ p\in\mathbb{R}:F_Z(p)\geq v\},$ $\forall v\in[0,1]$ is the generalized inverse distribution function of $Z$, and $\phi(v)$ is a weighting function defined on $[0,1]$ that satisfies the following properties:
\begin{enumerate}
    \item $\phi(v)\geq0$
    \item $\int_0^1\phi(v)dv=1$
    \item $\phi(v)$ is nondecreasing.
\end{enumerate}

Spectral risk measures include commonly used risk measures such as CVaR and the Wang risk measure \cite{wang2000class}. 
For example, the weighting function for CVaR with risk level parameter $\alpha\in(0,1)$ is defined as
\begin{equation}
    \phi(v):= 
    \begin{cases}
        \frac{1}{1-\alpha}, & \text{if} \ v\in[\alpha,1] \\
        0, & \text{otherwise}
    \end{cases}
    \label{eq:4}
\end{equation}
and the weighting function for the Wang risk measure with risk level parameter $\gamma$ is defined as
\begin{equation}
    \phi(v):= \exp\biggl(\gamma\Phi^{-1}(v)-\frac{\gamma^2}{2}\biggl)
    \label{eq:5}
\end{equation}
where $\Phi$ is the CDF of the standard normal distribution.

\section{Conformal Spectral Risk Control}
In this section, we extend CRC to spectral risk measures by presenting a reformulation of \eqref{eq:spec_int} as a weighted expectation and utilizing ideas from weighted CRC (W-CRC).
\subsection{Reformulation of Spectral Risk Measures}
Since the form of spectral risk measures presented in \eqref{eq:spec_int} is difficult to directly incorporate into CRC, we present an equivalent reformulation.
\begin{proposition}[Probability Integral Transform\cite{angus1994probability}]
    Let $Z$ be a continuous random variable with a CDF $F_Z$. Then, the random variable $U=F_Z(Z)$ has a standard uniform distribution $U\sim \text{Unif}(0,1)$.
    \label{prop:pit}
\end{proposition}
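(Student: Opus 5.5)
The plan is to compute the CDF of $U=F_Z(Z)$ directly and show that $P(U\le u)=u$ for every $u\in[0,1]$. This pins down the law of $U$ as $\text{Unif}(0,1)$. The tool is the generalized inverse $F_Z^{-1}(v)=\inf\{p\in\mathbb{R}:F_Z(p)\ge v\}$ defined before \eqref{eq:spec_int}, together with continuity of $F_Z$, which holds because $Z$ is a continuous random variable.

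First I will note that $F_Z$ takes values in $[0,1]$, so $U\in[0,1]$ almost surely. This settles the trivial cases: $P(U\le u)=0$ for $u<0$ and $P(U\le u)=1$ for $u\ge 1$. It therefore suffices to treat $u\in(0,1)$.

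Fix $u\in(0,1)$. Because $F_Z$ is continuous, nondecreasing, and tends to $0$ and $1$ at $\mp\infty$, the set $\{p: F_Z(p)\ge u\}$ is nonempty, bounded below, and closed. Its infimum $q_u:=F_Z^{-1}(u)$ is therefore attained. By continuity and the intermediate value theorem, $F_Z(q_u)=u$ exactly.

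The key step is to establish the event identity
\[
\{F_Z(Z)\le u\}=\{Z\le q_u\}\cup\{Z>q_u,\;F_Z(Z)=u\}.
\]
It follows from monotonicity. If $Z\le q_u$ then $F_Z(Z)\le F_Z(q_u)=u$. If $Z>q_u$ then $F_Z(Z)\ge u$, so $F_Z(Z)\le u$ forces $F_Z(Z)=u$.

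The main obstacle is the second event, where $F_Z$ is flat on an interval to the right of $q_u$. I will handle it by letting $r_u:=\sup\{p:F_Z(p)=u\}$. The event is then contained in $\{q_u<Z\le r_u\}$, or $\{q_u<Z<\infty\}$ in the degenerate case, which cannot occur for $u<1$. Its probability is $F_Z(r_u)-F_Z(q_u)=u-u=0$, using continuity of $F_Z$ at $r_u$.

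Hence $P(U\le u)=P(Z\le q_u)=F_Z(q_u)=u$. This is the standard uniform CDF, which proves Proposition~\ref{prop:pit}.
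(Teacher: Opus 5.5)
Your proof is correct, and it takes a different route from the paper, which gives no argument at all: the proposition is quoted as a classical fact with a pointer to \cite{angus1994probability}. Your direct computation of the CDF of $U$ therefore replaces the citation with a self-contained argument. What it buys is that it shows exactly where continuity of $F_Z$ enters. First, every $u\in(0,1)$ is attained, so $F_Z(q_u)=u$. Second, flat stretches of $F_Z$ carry zero probability. It also needs only continuity of $F_Z$, not a density.

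You can shorten it by using the right endpoint of the level set as the threshold from the outset. The point $r_u=\max\{p:F_Z(p)=u\}$ exists, because the level set is nonempty by the intermediate value theorem, closed, and bounded above since $u<1$. Then $\{F_Z(Z)\le u\}=\{Z\le r_u\}$ holds exactly, since $Z>r_u$ forces $F_Z(Z)\ge u$ and $F_Z(Z)\neq u$. This gives $P(U\le u)=F_Z(r_u)=u$ with no null-set bookkeeping.

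One small omission: your reduction to $u\in(0,1)$ skips $u=0$, where $q_u=-\infty$. That case follows from right-continuity of the CDF of $U$: $P(U\le 0)=\lim_{u\downarrow 0}u=0$.

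Your flat-stretch argument is also what justifies the identity $F_Z^{-1}(F_Z(Z))=Z$ almost surely, which the paper asserts without proof at the end of the proof of Theorem~\ref{thm:spec_risk}. So in this paper it does double duty.
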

\begin{theorem}
    For a continuous random variable $Z$, a spectral risk measure $\rho(Z)$ defined by the weighting function $\phi$ can be represented in the form
    \begin{equation}
        \rho(Z)=\mathbb{E}[Z\phi(F_{Z}(Z))]
        \label{eq:spec_expectation}
    \end{equation}
    \label{thm:spec_risk}
\end{theorem}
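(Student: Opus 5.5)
We start from the integral representation \eqref{eq:spec_int}, $\rho(Z)=\int_0^1\phi(v)F_Z^{-1}(v)\,dv$, and read the right-hand side as an expectation over a standard uniform variable. Let $U\sim\text{Unif}(0,1)$. Then $\rho(Z)=\mathbb{E}[\phi(U)F_Z^{-1}(U)]$, since the density of $U$ is $1$ on $[0,1]$. The goal is to replace $U$ by $F_Z(Z)$, which Proposition~\ref{prop:pit} allows because $Z$ is continuous: $F_Z(Z)\sim\text{Unif}(0,1)$. This gives
\begin{equation*}
\rho(Z)=\mathbb{E}\bigl[\phi(F_Z(Z))\,F_Z^{-1}(F_Z(Z))\bigr].
\end{equation*}

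The second step is to show that $F_Z^{-1}(F_Z(Z))=Z$ almost surely. By the definition of the generalized inverse, $F_Z^{-1}(F_Z(z))=\inf\{p:F_Z(p)\ge F_Z(z)\}\le z$ for every $z$. Strict inequality occurs only when $F_Z$ is constant on some interval $[p,z]$ with $p<z$. The set of such $z$ is contained in a countable union of intervals on which $F_Z$ is flat. Each such interval has $P$-probability zero, since $Z$ puts no mass where $F_Z$ does not increase. Hence $F_Z^{-1}(F_Z(Z))=Z$ with probability one, and the display becomes $\mathbb{E}[Z\phi(F_Z(Z))]$, which is \eqref{eq:spec_expectation}.

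An alternative route avoids the a.s. identity: use the quantile transform $Z\overset{d}{=}F_Z^{-1}(U)$ together with $F_Z(F_Z^{-1}(u))=u$ for continuous $F_Z$. Then $(Z,F_Z(Z))\overset{d}{=}(F_Z^{-1}(U),U)$ as a pair, and the expectation of the measurable function $(z,u)\mapsto z\phi(u)$ agrees on both sides. I would state this joint-distribution argument as the main line, because it matches the equality in distribution cleanly and sidesteps the flat-region bookkeeping.

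The main obstacle is technical rather than conceptual. We must justify equality of the pair distributions, not just the marginals, and we must make sure the expectations are well defined. The weighting function $\phi$ may be unbounded near $v=1$, as in the Wang measure \eqref{eq:5}. I would therefore assume, implicitly as in \eqref{eq:spec_int}, that the integral defining $\rho(Z)$ exists. Equality in distribution of the pair then transfers integrability and the value of the expectation, splitting into positive and negative parts of $z\phi(u)$ if needed.
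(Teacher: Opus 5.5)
Your proposal is correct. Your first two paragraphs reproduce the paper's proof exactly. The paper rewrites \eqref{eq:spec_int} as $\mathbb{E}[\phi(V)F_Z^{-1}(V)]$ with $V$ uniform, replaces $V$ by $F_Z(Z)$ via Proposition~\ref{prop:pit}, and finishes with $F_Z^{-1}(F_Z(Z))=Z$ almost surely. The paper only cites that last identity as ``the property of quantile functions''; your flat-interval argument actually proves it.

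The joint-distribution route you prefer is a genuinely different argument, and the two differ in where the work sits.
\begin{itemize}
\item The paper's route only needs the \emph{marginal} law of $F_Z(Z)$, because $\phi(u)F_Z^{-1}(u)$ is a function of $u$ alone. Continuity enters through the probability integral transform. The cost is the almost-sure identity $F_Z^{-1}\circ F_Z=\mathrm{id}$, which holds for any law but requires the null-set argument over flat stretches of $F_Z$.
\item Your route uses the quantile transform $Z\overset{d}{=}F_Z^{-1}(U)$, which holds for any law. Continuity enters only through the pointwise identity $F_Z(F_Z^{-1}(u))=u$ for $u\in(0,1)$. That identity is deterministic, so no null sets appear and the single use of continuity is cleanly isolated.
\end{itemize}

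The joint-law equality you flag as the main technical obstacle is in fact immediate. Push $Z\overset{d}{=}F_Z^{-1}(U)$ forward through the measurable map $z\mapsto(z,F_Z(z))$, then apply $F_Z\circ F_Z^{-1}=\mathrm{id}$ on $(0,1)$, using $U\in(0,1)$ almost surely.

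Your integrability caveat for unbounded $\phi$ (e.g.\ the Wang measure) is a point the paper leaves implicit. You handle it correctly by transferring the positive and negative parts through the equality in law.
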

\vspace{-8mm}
\begin{proof}
    Using the form of spectral risk measures in \eqref{eq:spec_int},
    \begin{equation}
        \rho(Z) = \int_{0}^{1}\phi(v)F_Z^{-1}(v)dv = \mathbb{E}[\phi(V)F_Z^{-1}(V)]
        \label{eq:7}
    \end{equation}
    where $V\sim \text{Unif} (0,1)$. Let $U$ be defined as in Proposition \ref{prop:pit}. Then, since $V$ and $U$ have the same distribution
    \begin{equation}
        \mathbb{E}[\phi(V)F_Z^{-1}(V)] = \mathbb{E}[\phi(U)F_Z^{-1}(U)]
        \label{eq:8}
    \end{equation}
    Substituting $U=F_Z(Z)$
    \begin{equation}
        \rho(Z)=\mathbb{E}[\phi(F_Z(Z))F_Z^{-1}(F_Z(Z))]
        \label{eq:9}
    \end{equation}
    The proof is complete by the property of quantile functions, where $F_Z^{-1}(F_Z(Z))=Z$ almost surely.
\end{proof}

\subsection{Conformal Spectral Risk Control via W-CRC}
The CRC framework introduced in Section \ref{sec:related_work} cannot be directly applied to generate prediction sets that control the weighted expectation of \eqref{eq:spec_expectation}. 
Therefore, we utilize the W-CRC\cite{angelopoulos2022conformal,zecchin2025generalization} framework, which has been used to maintain valid risk control guarantees under covariate shift. 
The W-CRC framework states that if we have bounded weights $w(\hat{Y})$ and a loss function $L(\lambda):=L(\mathcal{C_{\lambda}}(\hat{Y}),Y)$, we can provide the following guarantee for a user-specified risk threshold $\theta$:
\begin{equation}
    \mathbb{E}[w(\hat{Y})L(\lambda)]\leq\theta
    \label{eq:10}
\end{equation}
Similar to the CRC framework, assuming access to a calibration dataset $D_{cal}:=\{Y_i\}_{i=1}^n$ and the corresponding predictions $\hat{Y}_i$, the optimized prediction set parameter $\hat{\lambda}$ is found through solving a modified version of \eqref{eq:crc_opt}:
\begin{equation}
    \hat{\lambda}(\hat{Y}_{n+1}) = \inf_{\lambda\in \Lambda}\bigg\{\lambda:\frac{\sum_{i=1}^nw_iL_i(\lambda)+w_{n+1}B}{\sum_{i=1}^nw_i+w_{n+1}}\leq\theta\bigg\}
    \label{eq:wcrc_opt}
\end{equation}
where $w_i:=w(\hat{Y}_i)$. Refer to \cite{angelopoulos2022conformal,tibshirani2019conformal} for a formal proof. 
Therefore, we can solve the optimization problem in \eqref{eq:wcrc_opt} to control the expectation form of spectral risk measures presented in \eqref{eq:spec_expectation} by defining the weights as $w_i:=\phi(F_L(L_i(\lambda)))$.

Calculation of the weights $w_i$ requires $F_L$, which is the CDF of the loss function. 
In practice, since the true CDF is unknown, we can utilize an estimator $\hat{F}_L$. 
To minimize overfitting to the calibration data, we use an independent, exchangeable data set ${D}_{\text{est}}$ to construct the estimator. 
A common example of a CDF estimator is the empirical cumulative distribution function (eCDF), which we utilize in this work. 
The simulation studies in Section \ref{sec:simulation} validate that utilizing $\hat{F}_L$ still generates valid prediction sets.

Additionally, the optimization problem in \eqref{eq:wcrc_opt} includes $w_{n+1}$, which is the weight corresponding to the test data point $\hat{Y}_{n+1}$. 
Therefore, we must solve the optimization problem for every test data point. 
This adds significant online computation compared to CRC, which is problematic when incorporating prediction sets into real-time planning and control algorithms. 
Additionally, the W-CRC framework requires bounded weights, but in general, $\phi$ is unbounded. 
To address these problems, we introduce a conservative approximation in which $w_{n+1}$ is replaced by a finite upper bound $w_{\text{max}}$. 
In the case $\phi$ is bounded, then $w_{\text{max}}=\phi(1)$, as $\phi$ is nondecreasing. 
For unbounded $\phi$, we truncate $\phi$ at a value close to 1, then we add a correction factor to the risk threshold $\theta$ to account for the truncation. We first define the truncated risk measure:

\begin{definition}
    Consider a spectral risk measure $\rho$ with unbounded weighting function $\phi$. 
    Given a small constant $\epsilon>0$, the truncated risk measure $\rho_{t}$ is defined as
    \begin{equation}
        \rho_t(Z)=\int_0^1 \hat{\phi}(v)F_Z^{-1}(v)dv,
        \label{eq:12}
    \end{equation}
    \begin{equation}
        \hat{\phi}(v)=
        \begin{cases}
            \phi(v) \quad &\text{if} \ v\leq 1-\epsilon \\
            \phi(1-\epsilon) \quad &\text{if} \ v>1-\epsilon
        \end{cases}
        \label{eq:13}
    \end{equation}
\end{definition}
\begin{remark}
    In general, $\rho_t$ is not a spectral risk measure, since $\int_0^1\hat{\phi}(v)dv=1$ fails to hold. However, for a continuous random variable $Z$, Theorem \ref{thm:spec_risk} still holds, such that $\rho_t(Z)=\mathbb{E}(Z\hat{\phi}(F_Z(Z)))$.
    \label{rmk:trunc_risk}
\end{remark}

\begin{theorem}
    For a given spectral risk measure $\rho$ with unbounded weighting function $\phi$, a test data point $Y$ and the corresponding prediction $\hat{Y}$, and prediction set parameter $\lambda$, define the truncated risk measure $\rho_t$ based on Definition 2. Then for any risk threshold $\theta$,
    \begin{equation}
        \rho_t(L(\lambda))\leq\theta-\zeta \implies \rho(L(\lambda)) \leq \theta
        \label{eq:14}
    \end{equation}
    where $L(\lambda):=L(\mathcal{C_{{\lambda}}}(\hat{Y}),Y)$. The correction factor $\zeta:=B(\int_{1-\epsilon}^1\phi(v)dv-\epsilon w_{\text{max}})$, where $w_{\text{max}}:=\hat{\phi}(1)$ and $B$ is defined as the upper bound of $L$.
    \label{thm:spec_bound}
 \end{theorem}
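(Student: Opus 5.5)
The plan is to show $\rho(L(\lambda)) - \rho_t(L(\lambda)) \le \zeta$; the implication then follows at once, since $\rho(L(\lambda)) \le \rho_t(L(\lambda)) + \zeta \le \theta$. By Definition 2 the weighting functions $\phi$ and $\hat{\phi}$ agree on $[0,1-\epsilon]$. Subtracting \eqref{eq:12} from \eqref{eq:spec_int} therefore leaves only the tail:
\begin{equation*}
\rho(L(\lambda)) - \rho_t(L(\lambda)) = \int_{1-\epsilon}^1 \bigl(\phi(v) - \phi(1-\epsilon)\bigr) F_{L}^{-1}(v)\,dv ,
\end{equation*}
where $F_L$ is the CDF of $L(\lambda)$. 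Both integrals are taken over the same quantile function, so the step is pure linearity. It requires only that $\int_0^1 \phi(v) F_L^{-1}(v)\,dv$ be finite, which holds because the quantile function is bounded and $\phi$ is integrable.

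Next I bound the tail integrand. Since $\phi$ is nondecreasing, $\phi(v) - \phi(1-\epsilon) \ge 0$ for every $v \in [1-\epsilon, 1]$, so the weight multiplying $F_L^{-1}(v)$ is nonnegative. The loss is bounded by $B$, and I take it nonnegative as in CRC, so $0 \le F_L^{-1}(v) \le B$. Hence the difference is at most
\begin{equation*}
B\int_{1-\epsilon}^1 \bigl(\phi(v)-\phi(1-\epsilon)\bigr)dv = B\Bigl(\int_{1-\epsilon}^1 \phi(v)\,dv - \epsilon\,\phi(1-\epsilon)\Bigr).
\end{equation*}
By \eqref{eq:13}, $\hat{\phi}(1) = \phi(1-\epsilon)$, which equals $w_{\text{max}}$, so the right-hand side is exactly $\zeta$.

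The main point needing care is the sign of the quantile function. If the loss could be negative, then $F_L^{-1}$ could not be bounded below by $0$ and the bound above would fail. I would therefore state explicitly the standard CRC assumption $L \in [0,B]$, under which nonnegativity of $\phi(v)-\phi(1-\epsilon)$ lets us replace $F_L^{-1}(v)$ by its upper bound $B$.

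The other point is that $\int_{1-\epsilon}^1 \phi(v)\,dv$ is finite even though $\phi$ is unbounded. This follows from $\int_0^1\phi = 1$ together with $\phi \ge 0$. No continuity assumption on $L$ is needed here, because the argument works directly with the quantile representation \eqref{eq:spec_int} rather than the expectation form of Theorem~\ref{thm:spec_risk}.
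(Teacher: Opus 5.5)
Your argument follows the paper's proof: split both integrals at $1-\epsilon$, cancel the common part on $[0,1-\epsilon]$, and bound the tail by replacing $F_L^{-1}(v)$ with $B$. You also collect the tail into a single integral against the weight $\phi(v)-\phi(1-\epsilon)\ge 0$, which is cleaner than the paper's term-by-term substitution, because that nonnegativity is exactly what justifies replacing $F_L^{-1}$ by $B$ in the subtracted term.

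The part to fix is your claim that the bound fails for negative losses, which leads you to add the hypothesis $L\in[0,B]$. That hypothesis is never used. If $F_L^{-1}(v)\le B$ and $c:=\phi(v)-\phi(1-\epsilon)\ge 0$, then $c\,F_L^{-1}(v)\le c\,B$ whatever the sign of $F_L^{-1}(v)$. So only the upper bound $B$, which is all the statement provides, enters the argument.

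Imposing $L\ge 0$ would also be harmful. The theorem is applied to $L(\lambda)=d(\hat{Y},Y)-\lambda$, which is negative whenever $d(\hat{Y},Y)<\lambda$. Since $\rho(L)\ge\mathbb{E}[L]$ for any spectral risk measure, calibrating to $\rho(L)\le\theta=0$ forces $L<0$ with positive probability unless $L=0$ a.s. Your restricted version would therefore not cover the paper's own use case.

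Finiteness for the subtraction step does not need nonnegativity either. On $[1-\epsilon,1]$ you have $F_L^{-1}(1-\epsilon)\le F_L^{-1}(v)\le B$, and $\phi$ is integrable, so the tail integrals are finite. The common head integral is finite for any loss bounded below, such as $d-\lambda\ge-\lambda$. If it were $-\infty$, then $\rho(L)=-\infty\le\theta$ holds trivially.
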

 \begin{proof}     
    Using the form of spectral risk measures from \eqref{eq:spec_int} and Definition 2,
    \begin{align}
        \rho(L)&=\int_{0}^{1-\epsilon}F_L^{-1}(v)\phi(v)dv+\int_{1-\epsilon}^{1}F_L^{-1}(v)\phi(v)dv
        \label{eq:15}\\
        \rho_t(L)&=\int_{0}^{1-\epsilon}F_L^{-1}(v)\phi(v)dv+\int_{1-\epsilon}^{1}F_L^{-1}(v)\phi(1-\epsilon)dv
        \label{eq:16}
    \end{align}    
    where $\lambda$ is omitted for notational simplicity. Since $w_{\text{max}}=\hat{\phi}(1)=\phi(1-\epsilon)$ and $B$ is the upper bound of $L$,
    \begin{align}
        \rho(L)&=\rho_t(L)+\int_{1-\epsilon}^{1}F_L^{-1}(v)\phi(v)dv - \int_{1-\epsilon}^{1}F_L^{-1}(v)w_{\text{max}}dv \notag\\
        &\leq\rho_t({L})+\int_{1-\epsilon}^{1}B\phi(v)dv-\int_{1-\epsilon}^{1}Bw_{\text{max}}dv \notag\\
        & = \rho_t(L)+B\biggl(\int_{1-\epsilon}^{1}\phi(v)dv-\epsilon w_{\text{max}}\biggl)
        \label{eq:17}
    \end{align}
    Therefore, since $\zeta=B(\int_{1-\epsilon}^1\phi(v)dv-\epsilon w_{\text{max}})$,
    \begin{equation}
        \rho(L)\leq\rho_t(L)+\zeta
        \label{eq:spec_upper_bound}
    \end{equation}
    Because \eqref{eq:spec_upper_bound} holds regardless of $\lambda$, we can conclude that any $\lambda$ that controls $\rho_t$ below $\theta-\zeta$ also controls $\rho$ below $\theta$.
\end{proof}
    
Since $\rho_t$ can also be written as a weighted expectation (see Remark 1), and $\hat{\phi}$ is bounded, we can find the data-independent $\hat{\lambda}$ that satisfies the conditions of Theorem \ref{thm:spec_bound}. To do this, we solve the following optimization problem given $D_{\text{cal}}=\{Y_i\}_{i=1}^n$:
\begin{equation}
\hat{\lambda}=\inf_{\lambda \in \Lambda}\bigg\{\lambda:\frac{\sum_{i=1}^nw_iL_i(\lambda)+w_{\text{max}}B}{\sum_{i=1}^nw_i+w_{\text{max}}}\leq\theta-\zeta\bigg\}
\label{eq:19}
\end{equation}
where $w_i:=\hat{\phi}(F_L(L_i(\lambda)))$. 
As stated previously, in practice, we would utilize an estimator $\hat{F}_L$ designed with an independent, exchangeable dataset.
Algorithm 1 gives the overall algorithm for solving \eqref{eq:19}, which we refer to as conformal spectral risk control (CSRC).

\begin{algorithm}[t]
\caption{Conformal Spectral Risk Control (CSRC)}\label{alg:alg1}
\begin{algorithmic}[1]
    \footnotesize
    \STATE \textbf{Require:} Calibration set $D_{\text{cal}}$, estimation set $D_{\text{est}}$, loss function $L(\lambda)$, upper bound $B$, risk threshold $\theta$, parameter space $\Lambda=[\lambda_{\text{min}},\lambda_{\text{max}}]$, weighting function $\phi$, truncation limit $\epsilon$, parameter search step $\eta$
    \STATE $\lambda \gets \lambda_{\text{min}}$
    \IF{$\phi$ is bounded}
        \STATE $w_{\text{max}}\gets\phi(1), \ \zeta\gets0$ 
        \ELSE 
        \STATE $w_{\text{max}}\gets\phi(1-\epsilon), \ \zeta\gets B(\int_{1-\epsilon}^1\phi(v)dv-\epsilon w_{\text{max}})$
    \ENDIF
    \STATE Construct CDF estimator $\hat{F}_L$ using $D_{\text{est}}$
    \WHILE{$\lambda\leq\lambda_{\text{max}}$}
    \STATE $w_i \gets \phi(\hat{F}_L(L_i(\lambda)))$ for each ${L_i(\lambda)}_{i=1}^n$ from $D_{\text{cal}}$
    \STATE $R(\lambda)\gets\frac{\sum_{i=1}^nw_iL_i(\lambda)+w_{\text{max}}B}{\sum_{i=1}^nw_i+w_{\text{max}}}$
    \IF{$R(\lambda)\leq\theta-\zeta$}
        \STATE $\hat{\lambda} \gets \lambda$
        \RETURN $\hat{\lambda}$
    \ENDIF
    \STATE $\lambda \gets \lambda + \eta$
    \ENDWHILE
    \STATE $\hat{\lambda}\gets\lambda_{\text{max}}$
    \RETURN $\hat{\lambda}$
\end{algorithmic}
\end{algorithm}

\section{Distribution-Free RA-MPC with CSRC}
In this section, we present the distribution-free RA-MPC framework with spectral risk constraints that incorporates prediction sets generated by CSRC. Specifically, we formulate the following optimal control problem (OCP):
\begin{subequations}
\label{eq:ocp_original}
\begin{align}
    \min_{u} \ &\ J=\sum_{k=t}^{t+T}r(\xi_k,u_k) \label{eq:20a} \\
    \text{s.t.} \ &\ \xi_{k+1} = f(\xi_k,u_k) \label{eq:20b}\\
    &\ \rho(h(\xi_k,Y_{k}))\leq 0,\  k\in[t+1,t+T]
    \label{eq:risk_constraint_original}\\
    & \ \xi_{k} \in \Xi, \ u_k \in \mathcal{U} \label{eq:20d}
\end{align}
\end{subequations}
where $\xi_k$ is the state, $u_k$ is the control input, $Y_k\in\mathcal{Y}$ is an uncertain observation at time $k$, $r$ is the stage cost, $f$ is the system dynamics, $h$ is the constraint function, and $\rho$ is a spectral risk measure. 
In practice, however, the true distribution of $Y_k$ is often unknown. 
Instead, the controller only has access to predictions or estimates of $Y_k$, which may be misspecified. 
Therefore, we reformulate the OCP to incorporate the prediction sets generated by CSRC to provide statistical guarantees on satisfaction of the original spectral risk constraints with respect to the true distribution of $Y_k$. 
To do this, we introduce the following assumption:

\begin{assumption}
    The constraint function $h(\xi_k,Y_k)$ is Lipschitz continuous; i.e., for any $Y_k,\hat{Y}_k\in\mathcal{Y}$ and $\xi_k\in\Xi$ there exists a constant $K>0$ such that for some metric $d$ on $\mathcal{Y}$
\begin{equation}
    |h(\xi_k,Y_{k})-h(\xi_k,\hat{Y}_{k})| \leq Kd(\hat{Y}_{k},Y_{k})
    \label{eq:lipschitz}
\end{equation}
\label{assump:lipschitz}
\end{assumption}

\begin{figure}[t]
    \centering
    \includegraphics[]{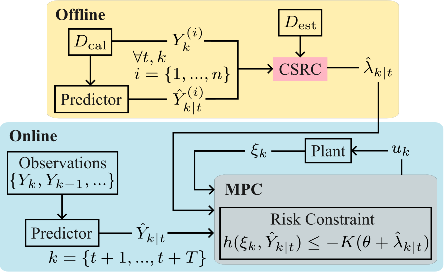}
    \caption{The overall CSRC-MPC framework. The optimized prediction set parameter $\hat{\lambda}_{k|t}$ found offline with CSRC is incorporated into the MPC risk constraints along with online predictions.}
    \label{fig:CSRC-MPC}
\end{figure}

\vspace{-10mm}
\begin{theorem}[CSRC-MPC]
    Let $\hat{Y}_{k|t}\in\mathcal{Y}$ be the prediction of $Y_k$ made at time $t$ and $\mathcal{C_{\hat{\lambda}}}(\hat{Y}_{k|t})$ be the optimized prediction set parameterized by $\hat{\lambda}$ generated through the CSRC algorithm using the following loss function:
    \begin{equation}
        L(\mathcal{C_{\lambda}}(\hat{Y}_{k|t}),Y_{k}) = d(\hat{Y}_{k|t},Y_{k})-\lambda_{k|t}
        \label{eq:loss_func}
    \end{equation}
    Then, for any constraint function $h$ that satisfies Assumption \ref{assump:lipschitz}, the OCP in \eqref{eq:ocp_original} can be conservatively reformulated as
    \begin{subequations}
    \begin{align}
    \min_{u} &\ J=\sum_{k=t}^{t+T}r(\xi_k,u_k) \\
    \text{s.t.} &\ \xi_{k+1} = f(\xi_k,u_k) \\
    &\ h(\xi_k,\hat{Y}_{k|t})\leq -K(\theta+\hat{\lambda}_{k|t}), \ k\in [t+1,t+T]
    \label{eq:risk_constraint_reformulated}\\
    & \ \xi_k \in \Xi, \ u_k \in \mathcal{U}
    \end{align}
    \label{eq:ocp_reformulated}
    \end{subequations}
\end{theorem}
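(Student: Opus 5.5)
The argument chains the CSRC guarantee for the distance-based loss into a guarantee on the original spectral risk constraint, using Assumption \ref{assump:lipschitz} together with the monotonicity, translation invariance and positive homogeneity of $\rho$ from Definition \ref{def:spectral_risk}. We show that any control sequence feasible for \eqref{eq:ocp_reformulated} also satisfies \eqref{eq:risk_constraint_original} with respect to the true distribution of $Y_k$. Conservatism of the reformulation then follows, because the feasible set of \eqref{eq:ocp_reformulated} is contained in that of \eqref{eq:ocp_original}.

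First, consider the loss \eqref{eq:loss_func}, $L(\lambda)=d(\hat{Y}_{k|t},Y_k)-\lambda_{k|t}$. It is nonincreasing in $\lambda$. We restrict $\Lambda$ to a compact interval and assume $d$ is bounded on the relevant set (e.g.\ a bounded $\mathcal{Y}$ or clipped distances), so that $L$ has an upper bound $B$ as CSRC requires. By construction of Algorithm \ref{alg:alg1}, and invoking the W-CRC guarantee \eqref{eq:10} with weights $\hat{\phi}(F_L(\cdot))$, Remark \ref{rmk:trunc_risk} and, for unbounded $\phi$, Theorem \ref{thm:spec_bound}, the returned $\hat{\lambda}_{k|t}$ satisfies
\begin{equation*}
\rho\bigl(d(\hat{Y}_{k|t},Y_k)-\hat{\lambda}_{k|t}\bigr)\leq\theta .
\end{equation*}
Since $\hat{\lambda}_{k|t}$ is deterministic given the calibration data, translation invariance gives $\rho(d(\hat{Y}_{k|t},Y_k))\leq\theta+\hat{\lambda}_{k|t}$.

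Second, fix a feasible trajectory, so that $\xi_k$ is determined by $u$ and the dynamics and is not random with respect to $Y_k$. By Assumption \ref{assump:lipschitz}, pointwise
\begin{equation*}
h(\xi_k,Y_k)\leq h(\xi_k,\hat{Y}_{k|t})+K\,d(\hat{Y}_{k|t},Y_k).
\end{equation*}
We then apply, in order, monotonicity, translation invariance (the term $h(\xi_k,\hat{Y}_{k|t})$ is treated as a constant given the prediction), and positive homogeneity with $K>0$:
\begin{equation*}
\rho(h(\xi_k,Y_k))\leq h(\xi_k,\hat{Y}_{k|t})+K\rho\bigl(d(\hat{Y}_{k|t},Y_k)\bigr)\leq h(\xi_k,\hat{Y}_{k|t})+K(\theta+\hat{\lambda}_{k|t}).
\end{equation*}
Imposing \eqref{eq:risk_constraint_reformulated} makes the right-hand side nonpositive, which yields \eqref{eq:risk_constraint_original} for every $k\in[t+1,t+T]$.

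The main obstacle is the first step: making precise in what sense CSRC delivers $\rho(L(\hat{\lambda}))\leq\theta$. W-CRC controls a weighted expectation marginally over calibration and test data. The weights use an estimated $\hat{F}_L$ rather than the true CDF, and they depend on $\lambda$. Theorem \ref{thm:spec_risk} also requires continuity of the loss. We would state the guarantee under these idealizations: the true $F_L$ (or a consistent estimator), a continuous distribution of $d(\hat{Y}_{k|t},Y_k)$, exchangeability between calibration and test pairs for each horizon index $k-t$, and treating the prediction $\hat{Y}_{k|t}$ as given (conditioning on it) so that translation invariance applies. With these assumptions made explicit, the remaining chain of inequalities is routine.
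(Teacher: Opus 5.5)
Your proposal is correct and follows essentially the same chain as the paper: the Lipschitz bound, then monotonicity, translation invariance and positive homogeneity of $\rho$, then the CSRC guarantee $\rho(L)\le\theta$ via Theorem~\ref{thm:spec_bound}. The only difference is cosmetic: you first convert the CSRC bound into $\rho(d(\hat{Y}_{k|t},Y_k))\le\theta+\hat{\lambda}_{k|t}$, whereas the paper substitutes $d=L+\hat{\lambda}_{k|t}$ pointwise before applying $\rho$; your explicit list of idealizations (true versus estimated $F_L$, $\lambda$-dependent weights, continuity, conditioning on $\hat{Y}_{k|t}$) spells out assumptions the paper's proof leaves implicit.
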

\vspace{-8mm}
\begin{proof}
    Following \eqref{eq:lipschitz} in Assumption \ref{assump:lipschitz},
    \begin{equation}
        h(\xi_{k},Y_{k}) \leq h(\xi_k,\hat{Y}_{k|t})+Kd(\hat{Y}_{k|t},Y_{k})
        \label{eq:lipschitz_result}
    \end{equation}
    Substituting the loss function from \eqref{eq:loss_func} for the optimized prediction set generated by CSRC into \eqref{eq:lipschitz_result}
    \begin{equation}
        h(\xi_{k},Y_{k}) \leq h(\xi_k,\hat{Y}_{k|t})+K(L(\mathcal{C_{\hat{\lambda}}}(\hat{Y}_{k|t}),Y_{k})+\hat{\lambda}_{k|t})
    \end{equation}
    Taking the spectral risk of $h(\xi_{k},Y_{k})$ over the uncertainty in $Y_{k}$ and using the properties in Definition \ref{def:spectral_risk}, we get
    \begin{equation}
        \rho(h(\xi_{k},Y_{k})) \leq h(\xi_k,\hat{Y}_{k|t})+K(\rho(L(\mathcal{C_{\hat{\lambda}}}(\hat{Y}_{k|t}),Y_{k})) +\hat{\lambda}_{k|t})
    \end{equation}
    From Theorem \ref{thm:spec_bound}, the optimized prediction set $\mathcal{C_{\hat{\lambda}}} (\hat{Y}_{k|t})$ satisfies $\rho(L(\mathcal{C_{\hat{\lambda}}}(\hat{Y}_{k|t}),Y_{k})) \leq \theta$. Therefore, we can conclude
    \begin{equation}
        \rho(h(\xi_{k},Y_{k})) \leq h(\xi_k,\hat{Y}_{k|t}) +K(\theta+\hat{\lambda}_{k|t})
    \end{equation}
    This shows that any solution satisfying the reformulated constraint in \eqref{eq:risk_constraint_reformulated} also satisfies the original spectral risk constraint in \eqref{eq:risk_constraint_original}.
\end{proof}

Fig. \ref{fig:CSRC-MPC} illustrates the overall pipeline of incorporating the offline CSRC prediction set parameterized by $\hat{\lambda}_{k|t}$ into the MPC framework to enforce risk constraints online with respect to the true uncertainty distribution.

\section{Simulation Studies}
In this section, we present simulation results to validate the CSRC framework for multiple spectral risk measures and demonstrate the robustness of the proposed MPC framework under uncertainty misspecification. 
To do this, we compare the proposed framework with a baseline RA-MPC framework in two 2D obstacle avoidance scenarios. 
In the first scenario, we consider stationary obstacle avoidance with a misspecified obstacle location estimation model. 
In the second scenario, we consider dynamic obstacle avoidance with a misspecified obstacle trajectory prediction model.
\label{sec:simulation}

\begin{figure}[t]
\centering
\includegraphics[width=1\linewidth]{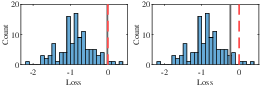}
\caption{Histograms of loss function evaluations on $D_{\text{test}}$ for stationary obstacle avoidance. The red line represents the user-specified risk level, and the gray line represents the empirical risk. Left: CVaR with risk parameter $\alpha=0.9$. Right: Wang risk measure with risk parameter $\gamma=1.5$.}
\label{fig:risk_control_stationary}
\end{figure}

\subsection{Simulation Setup}
For the simulations, we use the kinematic bicycle model \cite{polack2017kinematic} with the state vector $\xi$. The dynamics are given as
\begin{equation}
    \dot{\xi}=
    \begin{bmatrix}
        \dot{x} \\ \dot{y} \\ \dot{\psi} \\ \dot{v_x} \\ \dot{\delta} \\ \dot{a_x}
    \end{bmatrix} =
    \begin{bmatrix}
        v_x\cos(\psi+\beta) \\
        v_x\sin(\psi+\beta) \\
        v_x\frac{\cos(\beta)\tan(\delta)}{l_f+l_r} \\
        a_x \\
        \dot{\delta} \\
        j_x
    \end{bmatrix}
\end{equation}
where $\beta=\tan^{-1}(l_r/(l_f+l_r)\tan(\delta))$, $(x,y)$ represents the position of the vehicle center of mass, $\psi$, $\delta$ represent the yaw and steering angle of the vehicle, and $v_x$, $a_x$ represent the longitudinal velocity and acceleration. 
The variables $l_r$ and $l_f$ are the distances from the center of mass to the rear and front wheel axles. 
The control inputs are the steering rate $\dot{\delta}$ and the longitudinal jerk $j_x$.

To solve the OCP in \eqref{eq:ocp_reformulated}, we utilize Model Predictive Path Integral (MPPI) control, which is a sampling-based MPC algorithm. Therefore, the constraints are incorporated into the cost. The cost function is
\begin{equation}
    r(\xi_k,u_k,Y_k):=c_{\text{con}}+c_{\text{stage}}
\end{equation}
where $c_{\text{con}}$ penalizes constraint violations and is defined as
\begin{multline}
    c_{\text{con}}:=w_\xi\mathbbm{1}\{\xi_k\notin\Xi\}+w_u\mathbbm{1}\{u_k\notin\mathcal{U}\}\\
    +w_{\text{risk}}\mathbbm{1}\{h(\xi_k,\hat{Y}_{k|t})>-K(\theta+\hat{\lambda}_{k|t})\}
\end{multline}
where the tunable weights $w_\xi$, $w_u$, $w_{\text{risk}}$ are each set to $2\times10^6$. $c_{\text{stage}}$ penalizes distance to goal and large control inputs:
\begin{equation}
    c_{\text{stage}}:=w_{g}\sqrt{(x_k-x_g)^2+(y_k-y_g)^2}+w_{j_x}j_x^2+w_{\dot{\delta}}\dot{\delta}^2
\end{equation}
where $(x_g,y_g) =(20,0)$ is the goal position. The tunable weights $w_{j_x}$ and $w_{\dot{\delta}}$ are each set to $100$, and $w_g$ is set to $1000$.
The MPC prediction horizon is set to $T=10$ time steps with a sampling time of $T_s=\qty{0.1}{s}$, and each simulation is run for $N=55$ time steps. 
1024 trajectory rollouts are used for the MPPI optimization. 
For both simulation scenarios, the obstacles are assumed to be circular with a known and fixed radius $R=\qty{2}{m}$.

\begin{figure}[t]
    \centering
    \includegraphics[width=1\linewidth]{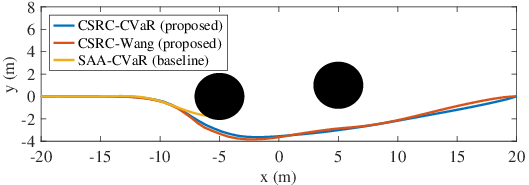}
    \caption{Representative vehicle trajectories for stationary obstacle avoidance, comparing the proposed CSRC-MPC using CVaR and Wang risk measure with the SAA-MPC baseline using the CVaR risk measure.}
    \label{fig:stationary_traj}
\end{figure}

\subsection{Stationary Obstacle Avoidance}
In this scenario, the vehicle is required to avoid two stationary obstacles. 
At each simulation timestep, the vehicle receives an estimate of the location of each obstacle. 
Let $(x_{\text{true}}^{(m)}, \ y_{\text{true}}^{(m)})$ represent the true location of the $m^\text{th}$ obstacle. 
The vehicle assumes that the obstacle location estimate at time $t$ is modeled as $\hat{Y}_t^{(m)}:=(x_{\text{true}}^{(m)}+\delta \hat{x}, \ y_{\text{true}}^{(m)}+\delta \hat{y})$, where the assumed estimation error $\delta \hat{x},\delta \hat{y} \sim \mathcal{N}(1,0.3^2)$. 
We consider a setting where the assumed estimation error is misspecified and does not reflect the true, unknown error distribution. 
Specifically, we model the true obstacle location estimate as ${Y^{(m)}}:=(x_{\text{true}}^{(m)}+\delta x, \ y_{\text{true}}^{(m)}+\delta y)$, where the true estimation error $\delta x,\delta y \sim \mathcal{N}(0,0.3^2)$. 
Since the true estimation error model is unknown to the vehicle at test time, prediction sets generated by the CSRC framework are used to enforce risk constraints defined with respect to the true estimation error model. 
The loss function used for the CSRC framework is
\begin{equation}
    L(\lambda) = ||Y-\hat{Y}_t||-\lambda
    \label{eq:loss_stationary}
\end{equation}
with user-specified risk threshold $\theta=0$. 
For circular obstacles, the prediction set parameterized by $\lambda$ can be interpreted as a buffer added to the obstacle radius for a given estimation $\hat{Y}_t$ to account for the misspecification between $\hat{Y}_t$ and $Y$. 
Since the Gaussian distribution has unbounded support, whereas the CSRC framework requires the loss function to be bounded, we constrain $Y$ and $\hat{Y}_t$ such that $B=5$. 
Based on this loss function and risk threshold, the obstacle constraint function is
\begin{equation}
    h(\xi_k,\hat{Y}_t^{(m)})=-||z_k-\hat{Y}_t^{(m)}||+R\leq -\hat{\lambda}, \quad m=1,2
\end{equation}
where $z_k:=(x_k,y_k)$ represents the vehicle position, and the Lipschitz constant $K=1$ for the distance function. 

We collect 1210 data points for this scenario and divide them into calibration, estimation, and test data sets: ${D}_{\text{cal}}=\{Y_i\}_{i=1}^{1000}$, ${D}_{\text{est}}=\{Y_i\}_{i=1001}^{1100}$, and ${D}_{\text{test}}=\{Y_i\}_{i=1101}^{1210}$ along with their corresponding assumed location estimates $\hat{Y}_i$. 
The calibration and estimation datasets are generated using randomized true obstacle locations, whereas the test dataset is equally divided between fixed obstacle locations $(x^{(1)}_{\text{true}},y^{(1)}_{\text{true}})=(-5,0)$ and $(x^{(2)}_{\text{true}},y^{(2)}_{\text{true}})=(5,1)$ used in the simulation. 
The CSRC framework is evaluated with two spectral risk measures: CVaR and Wang. 
The risk levels used for CVaR and Wang risk measures are $\alpha=0.9$ and $\gamma=1.5$, respectively, both of which represent high levels of risk aversion. 
Using Algorithm 1, the optimized prediction set parameters are found to be $\hat{\lambda}_{\text{CVaR}}=2.25$ and $\hat{\lambda}_{\text{Wang}}=2.36$. 
Fig. \ref{fig:risk_control_stationary} shows the histograms of the risks of the loss function \eqref{eq:loss_stationary} evaluated on $D_{\text{test}}$. 
For both CVaR and Wang, the risk is controlled below the user-specified threshold of $\theta=0$, validating the CSRC framework. We repeated the simulation with $D_{\text{est}}$ size increased to 1000, and observed no meaningful improvement in the risk control performance. This suggests that in practice, estimation of $F_L$ with a relatively small dataset still leads to accurate prediction set calibration.

\begin{figure}[t]
    \centering
    \includegraphics[width=1\linewidth]{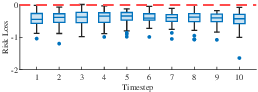}
    \caption{Boxplot of risk of the loss function by prediction timesteps evaluated on $D_{\text{test}}$ for the dynamic obstacle avoidance scenario. CVaR with risk parameter $\alpha=0.9$ is used for the spectral risk measure.}
    \label{fig:dynamic_coverage}
\end{figure}

Fig. \ref{fig:stationary_traj} shows a representative vehicle trajectory with the proposed CSRC-MPC framework, where the vehicle receives one estimation of each obstacle location $\hat{Y}_t^{(1)},\hat{Y}_t^{(2)}\in D_{\text{test}}$ at every simulation timestep. 
We compare the proposed framework with the baseline SAA-MPC\cite{hakobyan2019risk}, which uses online sampling to approximate the risk constraint. 
We assume that SAA-MPC has access to 100 samples of $\hat{Y}_t^{(1)}$ and $\hat{Y}_t^{(2)}$ at each simulation timestep, and the risk level is chosen to be the same as the CSRC framework with $\alpha=0.9$. 
Fig. \ref{fig:stationary_traj} shows that the baseline SAA-MPC fails to avoid the obstacles even under a high level of risk aversion, as the misspecified estimation model leads to an inaccurate risk constraint approximation. 
In contrast, the proposed MPC framework avoids the obstacles safely for both CVaR and Wang risk measures. 
This improvement is because the CSRC framework calibrates the prediction sets based on the discrepancy between the assumed and true estimation error distributions, making the proposed MPC framework more robust to uncertainty misspecification.

\begin{figure*}[t]
    \centering
    \includegraphics[width=0.90\linewidth]{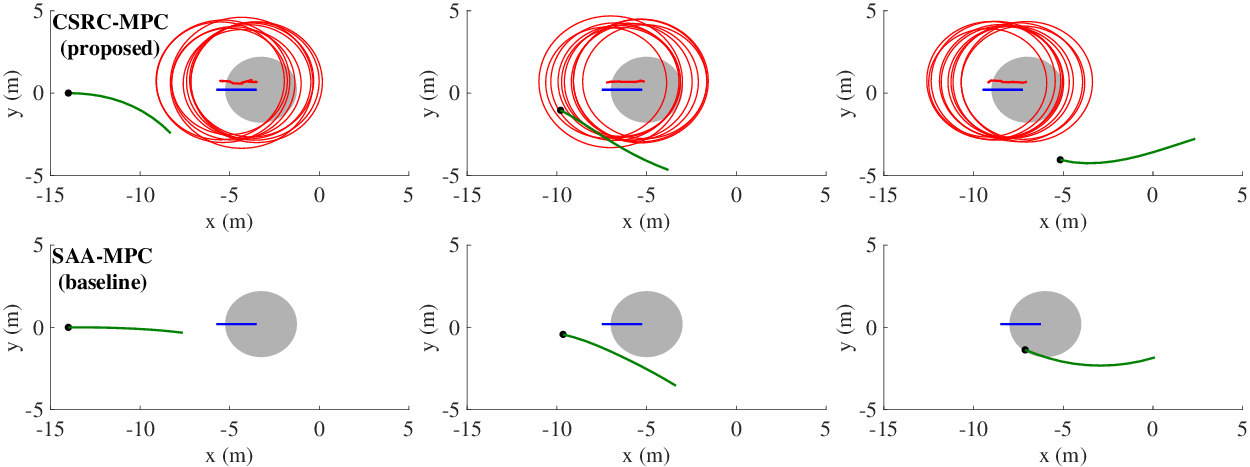}
    \caption{Representative vehicle trajectories for dynamic obstacle avoidance. Green represents the optimal planned trajectory, and blue represents the trajectory of the true obstacle. Top: Vehicle trajectory from CSRC-MPC using CVaR with risk parameter $\alpha=0.9$ at times $t=11,18$, and $26$. The obstacle trajectory predictions and their corresponding prediction sets are marked in red. Bottom: Vehicle trajectory from SAA-MPC using CVaR with risk parameter $\alpha=0.9$ at times $t=11,18,$ and $22$.}
    \label{fig:dynamic_traj}
\end{figure*}

\subsection{Dynamic Obstacle Avoidance}
In the second scenario, the vehicle is required to avoid a dynamic obstacle. 
Specifically, we consider an obstacle moving towards the vehicle in a straight trajectory with constant velocity. 
We assume the vehicle has access to a trajectory prediction model that predicts the obstacle trajectory for 10 future time steps using the 10 previous observations. 
For this simulation, we utilize a linear predictor based on singular spectrum analysis (SSA), similar to \cite{wei2022moving}.
However, the proposed framework is agnostic to the choice of prediction model. 
Similar to the previous scenario, we model the true observation at time $k$ as $Y_k:=(x_{\text{true}}^k+\delta x,y_{\text{true}}^k+\delta y)$, where $(x_{\text{true}}^k,y_{\text{true}}^k)$ is the true obstacle location, and $\delta x,\delta y \sim \mathcal{N}(0,0.05^2)$ represents the unknown true estimation error. 
To simulate a misspecified prediction model, we add an artificial bias of \qty{0.5}{m} to the predicted $x$ and $y$ positions of the obstacle. 
Let $\hat{Y}_{k|t}$ represent the obstacle location predictions made at time $t$ for future timesteps $k\in[t+1,t+T]$. 
Then, the corresponding loss function for the CSRC framework is
\begin{equation}
    L(\lambda)=||Y_{k}-\hat{Y}_{k|t}||-\lambda_{k|t}
    \label{eq:loss_dynamic}
\end{equation}
where $\lambda_{k|t}$ represents the prediction set parameter associated with $\hat{Y}_{k|t}$. 
As in the previous scenario, the risk threshold is set to $\theta=0$, and the observation and predictions are constrained to the simulation domain. 
Based on the loss function and risk threshold, the obstacle avoidance constraint is defined as
\begin{equation}
    h(\xi_k,\hat{Y}_{k|t})=-||z_k-\hat{Y}_{k|t}||+R\leq -\hat{\lambda}_{k|t}
\end{equation}
where $z_k=(x_k,y_k)$ is the vehicle position and $\hat{\lambda}_{k|t}$ is the optimized prediction set parameter.

We collect 2100 trajectory samples for this scenario using randomized initial obstacle locations, heading directions, and velocities. 
The data are then divided into calibration, estimation, and test sets: ${D}_{\text{cal}}=\{Y^{(i)}\}_{i=1}^{1000}$, ${D}_{\text{est}}=\{Y^{(i)}\}_{i=1001}^{1100}$, and ${D}_{\text{test}}=\{Y^{(i)}\}_{i=1101}^{2100}$. 
For simplicity, we assume past observations $(Y_{-10},...,Y_{-1})$ are available to the prediction model. 
Therefore, $Y^{(i)}:=(Y_{-10},Y_{-9},...)$. 
Then, the corresponding predictions $\hat{Y}^{(i)}_{k|t}$ are obtained for all times $t$ and $k\in[t+1,t+T]$. 
We evaluate the CSRC framework using CVaR with the risk level parameter $\alpha=0.9$. 
The box plot in Fig. \ref{fig:dynamic_coverage} shows that the average risk of the loss in \eqref{eq:loss_dynamic} is controlled below the user-specified threshold of $\theta=0$ at every prediction step, empirically validating the statistical guarantees provided by the CSRC framework.

We run 1000 simulations using the obstacle trajectories from $D_{\text{test}}$ and compare the proposed CSRC-MPC framework with the baseline SAA-MPC that uses 100 online trajectory prediction samples to estimate risk. The performance is evaluated using three metrics: obstacle constraint violation rate, success rate, and average solve time. 
The obstacle constraint violation rate is the percentage of simulation trajectories that have at least one obstacle constraint violation, where the risk is evaluated empirically with \qty{10000} Monte Carlo samples from the true observation model for each obstacle trajectory from $D_{\text{test}}$. 
The success rate is defined as the percentage of trajectories that avoid the obstacle successfully. 
Table \ref{tab:performance} shows that the proposed MPC framework significantly improves obstacle avoidance performance and risk constraint satisfaction. 
Moreover, because CSRC-MPC generates the prediction sets offline, the average solve time is lower compared to SAA-MPC and fast enough for real-time planning and control.

\begin{table}[t]
\caption{Dynamic Obstacle Avoidance Results}
    \centering
    \begin{tabular}{ccc}
    \toprule
     \textbf{Performance Metric}  & \textbf{SAA-MPC} & \textbf{CSRC-MPC} \\
     & (baseline) & (proposed) \\
     \midrule
     Obstacle constraint violation (\%)  & 52.9 & 6.0  \\
     Success rate (\%) & 67.5 & 100 \\ 
     Average solve time (ms) & 104.9 & 49.9\\
     \bottomrule
    \end{tabular}
    \label{tab:performance}
\end{table}

\begin{remark}
    Although CSRC-MPC reduces obstacle constraint violations compared to SAA-MPC, it does not eliminate constraint violations for every simulation. This is because the empirical risk for this metric is evaluated with Monte Carlo samples of each obstacle trajectory, whereas the CSRC framework provides marginal guarantees over all possible obstacle trajectories. However, the improvement over the baseline is still significant, as the constraint violations do not lead to collisions.
    \label{rmk:marginal}
\end{remark}

Fig. \ref{fig:dynamic_traj} compares representative vehicle trajectories generated by the proposed CSRC-MPC and the baseline SAA-MPC, with both frameworks using CVaR with risk parameter $\alpha=0.9$. 
The CSRC framework generates prediction sets that allow the vehicle to avoid the obstacle safely despite misspecification in the obstacle trajectory predictions. 
In contrast, SAA-MPC relies on risk estimation solely based on online sampling from the biased prediction model, which leads to collisions even under high levels of risk aversion.

\section{Conclusion}
We present CSRC, a novel risk quantification framework that generates prediction sets that directly control spectral risk measures below a user-specified threshold without requiring assumptions on the underlying uncertainty distribution. 
We develop CSRC-MPC, an RA-MPC framework that incorporates these prediction sets to provide statistical safety guarantees in terms of spectral risk constraint satisfaction. 
Simulation results for stationary obstacle avoidance demonstrate valid risk control for CVaR and Wang risk measures, and show improved obstacle avoidance performance compared to the baseline SAA-MPC under a misspecified estimation error model. 
Results for the dynamic obstacle avoidance scenario further demonstrate improved obstacle avoidance performance compared to the baseline under a misspecified obstacle trajectory prediction model. 
Additionally, the proposed framework reduces the solve time compared to the baseline, demonstrating real-time feasibility without sacrificing safety.

Future work may incorporate ideas from adaptive CP \cite{gibbs2021adaptive} to maintain valid guarantees under distribution shift. 
Also, as noted in Remark \ref{rmk:marginal}, the guarantees provided by the CSRC framework are marginal. 
It would be interesting to investigate providing stronger conditional guarantees, especially in more safety-critical scenarios. 
Other potential future research directions include extending the framework to account for multiple sources of risk and validating the proposed MPC framework through real-world hardware experiments.

\bibliographystyle{IEEEtran}
\bibliography{references}

\end{document}